\documentclass{article}
\usepackage{iclr2027_conference,times}
\iclrfinalcopy
\usepackage{amsmath,amssymb,amsthm}

\usepackage{amsmath,amsfonts,bm}

\def\eqref#1{equation~\ref{#1}}

\def\1{\bm{1}}

\DeclareMathAlphabet{\mathsfit}{\encodingdefault}{\sfdefault}{m}{sl}
\SetMathAlphabet{\mathsfit}{bold}{\encodingdefault}{\sfdefault}{bx}{n}

\usepackage{bm}

\usepackage{hyperref}

\usepackage{graphicx}

\usepackage{booktabs}
\usepackage{multirow}
\usepackage{array}

\usepackage[table]{xcolor}
\definecolor{oursbg}{RGB}{237,244,250}
\usepackage{pifont}

\usepackage{float}
\usepackage[ruled,vlined]{algorithm2e}
\usepackage[most]{tcolorbox}

\usepackage{xspace}
\newcommand{\method}{\textbf{$\bm{\pi}$PPO}\xspace}

\definecolor{piacblue}{HTML}{1F4E79}
\definecolor{piacpos}{HTML}{2E7D32}
\definecolor{piacneg}{HTML}{C62828}
\definecolor{piacgray}{HTML}{EEF3F8}
\definecolor{piacgold}{HTML}{8A6D1D}
\definecolor{rliableblue}{RGB}{0, 102, 204}

\newtheorem{proposition}{Proposition}
\newtcolorbox{thmbox}{
  enhanced, breakable,
  colback=piacblue!4!white,
  colframe=piacblue!85!black,
  boxrule=0.6pt, arc=1.2mm,
  left=9pt, right=9pt, top=7pt, bottom=7pt, boxsep=1pt,
  borderline west={2.2pt}{0pt}{piacblue},
}

\usepackage{enumitem}
\usepackage{capt-of}

\definecolor{privbg}{HTML}{FDF3E3}
\definecolor{privrule}{HTML}{D98F2B}
\definecolor{tokgray}{HTML}{9AA0A6}

\newcommand{\cmt}[1]{\textit{\color{gray}{/* #1 */}}}

\newcommand{\lbl}[1]{\textbf{#1}}

\newlist{pistep}{enumerate}{1}
\setlist[pistep]{label={\color{gray}\footnotesize(\arabic*)},
  leftmargin=2.1em, labelsep=.6em, topsep=0pt, itemsep=2pt, parsep=0pt}

\newtcolorbox{picol}[1]{enhanced, sharp corners, boxrule=.6pt,
  colback=white, colframe=black!55, coltitle=white, colbacktitle=black!68,
  fonttitle=\small, fontupper=\footnotesize, title={#1},
  left=6pt, right=6pt, top=6pt, bottom=6pt,
  before upper={\setlength{\parindent}{0pt}\setlength{\parskip}{3pt}}}

\newtcolorbox{privblock}{enhanced, sharp corners, breakable=false,
  colback=privbg, colframe=privrule, boxrule=0pt, leftrule=2.2pt,
  boxsep=0pt, left=6pt, right=6pt, top=5pt, bottom=5pt,
  before skip=4pt, after skip=4pt,
  before upper={\setlength{\parindent}{0pt}\setlength{\parskip}{3pt}}}

\usepackage[misc]{ifsym}

\makeatletter
\fancypagestyle{firstpage}{
  \fancyhf{}
  \renewcommand{\headrulewidth}{0.4pt}
  \IfFileExists{hunyuanlogo.png}{
    \lhead{\includegraphics[height=18pt]{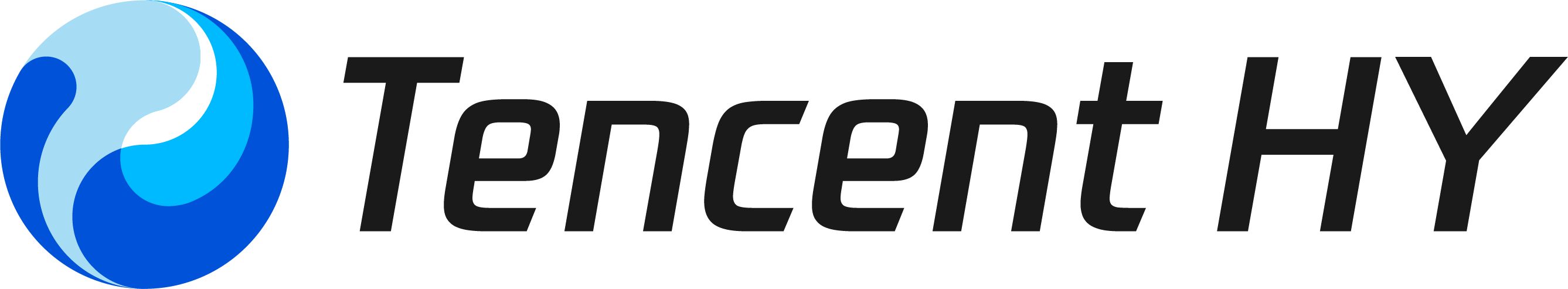}}
  }{}
  \renewcommand{\headrule}{{\color[HTML]{4D4D4D}\hrule\@height\headrulewidth\@width\headwidth\vskip-\headrulewidth}}
  \fancyfoot[C]{\thepage}
  \addtolength{\headsep}{-16pt}
  \addtolength{\footskip}{16pt}
}
\makeatother
\title{Privy to the Foil: Recasting Value Estimation with a Self-Privileged Critic for RLVR}

\author{
    \textbf{Kun Liang}$^{1,2,*}$ \quad
    \textbf{Chenming Tang}$^{1,2,*}$ \quad
    \textbf{Clive Bai}$^{3}$ \quad
    \textbf{Weijie Liu}$^{3}$  \\
    \textbf{Zeyuan Liu} $^{3}$  \quad
    \textbf{Qingyang Zhang} $^{3}$  \quad
    \textbf{Saiyong Yang}$^{3,\dagger}$ \quad
    \textbf{Yunfang Wu}$^{1,2,\dagger}$ \\
    \noalign{\vskip 5pt}
    $^1$School of Computer Science, Peking University \\
    $^2$National Key Laboratory for Multimedia Information Processing, Peking University \\
    $^3$Foundation Model Department, Tencent \\
    \noalign{\vskip 5pt}
    \Letter~{\fontsize{10.5}{12}\selectfont\texttt{kliang25@stu.pku.edu.cn}} \hspace{0.5em}
    \Letter~{\fontsize{10.5}{12}\selectfont\texttt{wuyf@pku.edu.cn}}
}

\begin{document}

\maketitle
\thispagestyle{firstpage}
\enlargethispage{16pt}
\vspace{-10pt}

\renewcommand*{\thefootnote}{\fnsymbol{footnote}}
\footnotetext{$^*$ Work done during internship at Tencent.}
\footnotetext{$^\dagger$ Corresponding Authors.}
\renewcommand*{\thefootnote}{\arabic{footnote}}

\begin{abstract}

Assigning credit to intermediate steps remains a central challenge in training Large Language Models (LLMs) on multi-step reasoning tasks with sparse terminal rewards, and actor-critic methods such as PPO address this by learning value functions to construct token-level advantages.
Their effectiveness, however, hinges on reliable value estimation, a difficult task requiring the critic to both assess progress toward a correct solution and anticipate an evolving policy's future behavior; errors in either can compromise credit assignment and destabilize online training.
In this paper, we revisit the standard state-only formulation of value estimation and propose \method, a self-privileged actor-critic framework.
By reusing verified same-prompt rollouts as contrastive evidence, $\pi$PPO helps the critic assess intermediate reasoning against successful and failed attempts, while preserving standard policy optimization and the deployment interface.
Experiments show that $\pi$PPO consistently improves value-estimation quality by a substantial margin and outperforms representative actor-critic and critic-free RLVR baselines on challenging mathematical reasoning benchmarks, while remaining effective even when paired with substantially smaller asymmetric critics.
\end{abstract}

\section{Introduction}

On-policy feedback-based learning methods stand out in LLM post-training for their effectiveness, especially on incentivizing long-form reasoning capabilities on verifiable tasks, providing a promising way for scaling~\citep{ds-r1,team2025kimi, qwen3}. 
Despite the empirical success of recent value-free methods, explicit value estimation remains a fundamental optimization paradigm in reinforcement learning. 
Beyond reducing variance, it provides a state-conditioned baseline, enabling principled credit assignment and readily accommodates diverse rollout structures~\citep{reinforce, greensmith2004variance,gae}.
Therefore, advancing value estimation offers a promising path toward improving policy optimization in LLM post-training.

However, accurately estimating the value of a partial response remains inherently challenging: the value network must both understand the space of correct solutions—the very task the policy itself is learning—and predict the policy’s future behavior~\citep{shan2026bringingvaluemodelsback,zhang2026v0generalistvaluemodel}. This difficulty is further exacerbated in long-form reasoning, where rewards are sparse and often observable only at the ends of exceptionally long trajectories, leaving a vast space of intermediate states with learning targets that are both weakly informative and highly uncertain. 
In actor–critic training, inaccurate value estimates can distort token-level advantages and misguide policy updates, undermining training stability and final performance~\citep{gae,vcppo}.

Existing approaches improve critic learning through better initialization and return estimation~\citep{vcppo,yue2025vapo, hou2026singlerolloutasynchronousoptimizationagentic}, or obtain more informative value targets through additional continuation sampling and intermediate-state modeling~\citep{kazemnejad2024vineppo,guo2025segmentpolicyoptimizationeffective, chen2026histanumcaestimatestate}.
Nevertheless, whether by strengthening the critic or enriching its targets, these solutions inherit the same largely unquestioned premise: both demands placed on the critic must be met by the critic itself, using only the information available from its prescribed vantage point; as we show later, however, this need not be taken as given.

To this end, we propose \textbf{P}rivileged-\textbf{I}nformation \textbf{PPO} (\method), which turns verified trajectories into online references for state-level value prediction.
We adapt PPO to sample groups of rollouts for each prompt, giving each trajectory a dual role: a sample for policy learning and a potential reference for evaluating partial responses in other trajectories.
For each target rollout, we construct a leave-target-out critic context from its verified siblings, preserving a valid baseline while leaving the policy's conditioning unchanged.
These references evolve with the policy and require no additional generation beyond the rollout groups used for policy updates.

As shown in Section~\ref{sec:exp}, $\pi$PPO consistently improves value-estimation quality across two model backbones on mathematical reasoning tasks, with corresponding gains in end-to-end policy performance over representative RL baselines. 
This design also enables $\pi$PPO to use substantially smaller critics while outperforming actor–critic baselines with actor-sized value models.

Our contributions are as follows:
\begin{itemize}
    \item \textbf{Conceptual.} We recast value estimation in RLVR as an information problem
    characterizing inherent properties of value prediction under binary verifiable rewards that highlight potential limitations of the conventional state-only formulation.
    \item \textbf{Methodological.} We propose $\pi$PPO, which supplies online evidence on the critic side, conditioning each value estimation on a leave-target-out context of verified same-prompt rollouts already collected for policy updates, thus preserving a valid baseline while leaving policy inference unchanged, all without additional rollout generation.
    \item \textbf{Empirical.} We show that $\pi$PPO consistently improves value-estimation quality and outperforms representative actor--critic and critic-free RLVR baselines on mathematical reasoning benchmarks. It remains effective even with substantially smaller asymmetric critics.
\end{itemize}

\section{Preliminary}

\paragraph{Notation.} 
In this paper, we define an LLM parameterized by $\theta$ as a policy $\pi_\theta$. Let $x$ denote a query and $\mathcal{D}$ the set of queries. Given a response $y$ to the query $x$, its likelihood under the policy $\pi_\theta$ is expressed as $\pi_\theta(y \mid x) = \prod_{t=1}^{|y|}\pi_\theta(y_t\mid x, y_{<t})$, where $|y|$ denotes the number of tokens in $y$. A query-response pair $(x,y)$ is scored by a rule-based outcome reward $r(x,y)\in \{0,1\}$, indicating whether the response $y$ aligns with the ground truth of $x$.

\paragraph{Proximal Policy Optimization (PPO).}
PPO~\citep{schulman2017ppo} follows the actor--critic framework, jointly learning a policy model (actor) $\pi_\theta$ and a value model (critic) $V_\phi$ parameterized by $\phi$, updating the policy using the following clipped policy surrogate (we omit the KL regularization term hereinafter for brevity):
{\small
\begin{align}
\mathcal{J}_\text{PPO}(\theta) = \mathbb{E}_{ x \sim \mathcal{D},\, y \sim \pi_{\theta_\text{old}}( \cdot | x) }
\left[ \frac{1}{|y|} \sum_{t=1}^{|y|} 
\min \left( \rho_{t}(\theta) \widehat{A}_{t},  \, \mathrm{clip} \left( \rho_{t}(\theta), 1 - {\varepsilon}, 1 + {\varepsilon}\right) \widehat{A}_{t} \right)
\right],
\end{align}
}
where
\(
\rho_t(\theta)
=
\frac{\pi_\theta(y_t\mid x,y_{<t})}
{\pi_{\theta_{\text{old}}}(y_t\mid x,y_{<t})}
\) denotes the token-level probability ratio at step $t$, and $\varepsilon$ is the clipping coefficient. The learned critic enters the policy update through the advantage \(\widehat A_t\), which is commonly constructed using Generalized Advantage Estimation (GAE)~\citep{gae}. Since the critic scores every state \(s_t=(x,y_{<t})\) individually, the objective thereby enables dense, token-level credit assignment even when the environment returns a single terminal reward. 

\paragraph{Group Relative Policy Optimization (GRPO).}
GRPO~\citep{shao2024grpo} bypasses the value model by computing the relative advantage of each response within a group of $G$ responses to the same query.
Given $\{y_i\}_{i=1}^{G}\sim
\pi_{\theta_{\mathrm{old}}}(\cdot\mid x)$, it averages the clipped surrogate over the group, using
{\small
\begin{equation}
\widehat A_{i,t}=\widehat A_i
=\frac{
r(x,y_i)-\mathrm{mean}\big(\{r(x,y_j)\}_{j=1}^{G}\big)
}{
\mathrm{std}\big(\{r(x,y_j)\}_{j=1}^{G}\big)
}.
\label{eq: adv-ratio}
\end{equation}
}
All tokens in $y_i$ share the same advantage $\widehat A_i$.

\paragraph{Privileged Information (PI).}
Learning using privileged information augments a standard input \(x\in\mathcal X\) with task-relevant auxiliary information \(x^\star\in\mathcal X^\star\) available during training but absent from the standard inference input~\citep{vapnik2009new}. For LLMs, both the query and privileged information can be represented as token sequences over the model vocabulary \(\mathcal V\), i.e., \(x,\mathcal I\in\mathcal V^\ast\), and incorporated by conditioning the policy as \(\pi_\theta(\cdot\mid x,\mathcal I)\), where \(\mathcal I\) may encode hints, verified answers, or reference rationales~\citep{qu2026pope,zhao2026self}.

\section{Methodology}

\label{sec:method}
In this section, we introduce a new perspective on incorporating privileged information into policy optimization: enriching the critic’s information while preserving the policy’s original objective of maximizing environment-defined rewards. We instantiate this perspective using information available within the training process, without additional generation or external expert knowledge.

\subsection{Interpreting a state: the ``missing'' context in value estimation}

Standard critics fit a single value predictor across states, predicting each state's expected return from its representation. We view this shared-prediction setup through the lens of fully amortized optimization~\citep{shu2017amortized, amos2025tutorialamortizedoptimization}, in which a generic model serves all prediction instances without instance-specific adaptation or auxiliary evidence. We first examine the resulting limitation of this formulation and then characterize the underlying information structure for value estimation under binary verifiable rewards.

\begin{thmbox}
\begin{proposition}[Privileged conditioning for amortized value estimation]
\label{prop:gap}
\leavevmode\par\nobreak\vspace{1pt}
Fix a policy $\pi$ with undiscounted terminal return $R\in\{0,1\}$,
and define $V^\pi(s)=\mathbb{E}[R\mid s]$ and
$V^{\pi,+}(s,\mathcal I)=\mathbb{E}[R\mid s,\mathcal I]$.
Assume finite squared error and that contextual critics
$V_\phi^+(s,\mathcal I)$ can represent every state-only critic
$V_\phi(s)$. Under a common joint distribution of $(s,\mathcal I,R)$,
\begingroup
\fontsize{9.5pt}{9pt}\selectfont
\[
\begin{aligned}
    \underbrace{
    \inf_\phi \mathbb{E}[(R-V_\phi^+(s,\mathcal I))^2] }_{\text{contextual prediction risk}} &= \mathbb{E}_{s,\mathcal I}[\operatorname{Var}(R\mid s,\mathcal I)]
    + \inf_\phi \mathbb{E}[(V_\phi^+(s,\mathcal I)-V^{\pi,+}(s,\mathcal I))^2]
    \\[2pt]
    &\le  \mathbb{E}_s[\operatorname{Var}(R\mid s)]
    + \inf_\phi \mathbb{E}[(V_\phi(s)-V^\pi(s))^2] = \underbrace{
    \inf_\phi \mathbb{E}[(R-V_\phi(s))^2]
}_{\text{state-only prediction risk}}.
\end{aligned}
\]
\endgroup
\end{proposition}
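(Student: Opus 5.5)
The proof has three steps: decompose each risk around the corresponding conditional expectation, compare the two infima using the representability assumption, and then rewrite each infimum in the bias--variance form shown in the display.

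\textbf{Step 1: decomposition of the contextual risk.} Fix any measurable critic $f(s,\mathcal I)$ with $\mathbb{E}[f^2]<\infty$. Write $R-f=(R-V^{\pi,+})+(V^{\pi,+}-f)$ and expand the square. The cross term $\mathbb{E}[(R-V^{\pi,+}(s,\mathcal I))(V^{\pi,+}(s,\mathcal I)-f(s,\mathcal I))]$ vanishes by the tower property: conditional on $(s,\mathcal I)$, the second factor is fixed and the first has conditional mean zero. Finite squared error makes every term integrable; since $R\in\{0,1\}$, $R$ is bounded, so this is automatic for $R$. This gives
\[
\mathbb{E}[(R-f)^2]=\mathbb{E}_{s,\mathcal I}[\operatorname{Var}(R\mid s,\mathcal I)]+\mathbb{E}[(f-V^{\pi,+})^2].
\]
Taking $\inf_\phi$ over $f=V_\phi^+$ yields the first equality, because the variance term does not depend on $\phi$.

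\textbf{Step 2: the same decomposition for state-only critics.} The identical argument with $V^\pi(s)=\mathbb{E}[R\mid s]$ and $f=V_\phi(s)$ gives the final equality in the display.

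\textbf{Step 3: the inequality.} I would not compare the two decomposed expressions term by term. Instead I compare the raw risks $\inf_\phi \mathbb{E}[(R-V_\phi^+)^2]$ and $\inf_\phi \mathbb{E}[(R-V_\phi)^2]$. By assumption, for every state-only $\phi$ there is a contextual $\phi'$ with $V^+_{\phi'}(s,\mathcal I)=V_\phi(s)$ almost surely. Since the joint distribution of $(s,\mathcal I,R)$ is common to both problems, the two critics incur the same risk. So the contextual infimum is taken over a superset of the achievable risk values, and
\[
\inf_\phi \mathbb{E}[(R-V_\phi^+)^2]\le \inf_\phi \mathbb{E}[(R-V_\phi)^2].
\]
Substituting the identities from Steps 1 and 2 into the two sides gives exactly the displayed chain.

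The main obstacle is resisting a term-wise comparison of the decomposed expressions. It is true that $\mathbb{E}[\operatorname{Var}(R\mid s,\mathcal I)]\le\mathbb{E}[\operatorname{Var}(R\mid s)]$, by the law of total variance together with the tower property $\mathbb{E}[V^{\pi,+}\mid s]=V^\pi$. However, the approximation terms need not be ordered, so the inequality must be proved at the level of total risk, which is where the representability assumption enters. A secondary point is to state that "represent" means almost-sure equality under the common distribution, so that the risks coincide exactly.
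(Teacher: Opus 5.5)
Your proposal is correct and follows essentially the same route as the paper's proof. You obtain both equalities from the orthogonal decomposition around the conditional expectation, and you obtain the inequality by comparing the raw optimal risks through the representability embedding (almost-sure equality $V^+_{\psi}=V_\phi$), not term by term. Your remark that the approximation terms need not be ordered is a correct and useful clarification of why the comparison has to be made at the level of total risk.
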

\end{thmbox}

Thus, the residual error is not solely an optimization artifact: part of it reflects return uncertainty unresolved by the target state alone. 
We interpret this gap as \emph{informational}: a state-conditioned predictor captures only
the return variation predictable from \(s\), while additional evidence \(\mathcal I\) can reduce the Bayes error insofar as it explains residual variation beyond \(s\).

Under binary verifiable rewards, this return uncertainty concerns whether a trajectory will succeed or fail. We next examine value prediction through this outcome distinction, relating prediction error to the contrast between the critic's predictions on successful and failed trajectories.

\begin{thmbox}
\begin{proposition}[Success--failure contrast in value estimation]
\label{prop:contrast}
\leavevmode\par\nobreak\vspace{1pt}
Fix a policy $\pi$ with undiscounted terminal return $R\in\{0,1\}$
and $\operatorname{Var}(R)>0$.
All expectations are taken under the common joint distribution of $(s,R)$.
For any square-integrable $V_\phi(s)$, define
\[
    \Delta_\phi = \mathbb{E}[V_\phi(s)\mid R=1] - \mathbb{E}[V_\phi(s)\mid R=0].
\]
Its prediction risk satisfies
\[
    \mathbb{E}[(R-V_\phi(s))^2] \ge
    \underbrace{\operatorname{Var}(R)(1-\Delta_\phi)^2 }_{\text{success--failure contrast error}}.
\]
Consequently, any critic outperforming the optimal constant predictor
$V_0=\mathbb{E}[R]$ (with risk $\operatorname{Var}(R)$)
must exhibit positive success--failure contrast, i.e., $\Delta_\phi>0$.
\end{proposition}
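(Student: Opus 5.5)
The plan is to reduce the bound to a single Cauchy--Schwarz (projection) step, with the success--failure contrast $\Delta_\phi$ appearing as a rescaled covariance. Write $p=\mathbb{E}[R]$. Since $R\in\{0,1\}$ and $\operatorname{Var}(R)=p(1-p)>0$, we have $0<p<1$, so both conditional means $m_1=\mathbb{E}[V_\phi(s)\mid R=1]$ and $m_0=\mathbb{E}[V_\phi(s)\mid R=0]$ are well defined. Square-integrability of $V_\phi$ makes them finite, and $\Delta_\phi=m_1-m_0$.

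\textbf{Step 1: covariance identity.} Compute $\operatorname{Cov}(V_\phi(s),R)$ directly. We have $\mathbb{E}[V_\phi R]=p\,m_1$ and $\mathbb{E}[V_\phi]=p\,m_1+(1-p)\,m_0$, so
\[
\operatorname{Cov}(V_\phi,R)=p\,m_1-p\bigl(p\,m_1+(1-p)\,m_0\bigr)=p(1-p)(m_1-m_0)=\operatorname{Var}(R)\,\Delta_\phi .
\]

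\textbf{Step 2: drop the mean, then project onto $R$.} First,
\[
\mathbb{E}[(R-V_\phi)^2]=\operatorname{Var}(R-V_\phi)+\bigl(\mathbb{E}[R-V_\phi]\bigr)^2\ge\operatorname{Var}(R-V_\phi).
\]
Next apply Cauchy--Schwarz to the centered variables $R-V_\phi$ and $R$:
\[
\operatorname{Var}(R-V_\phi)\ge\frac{\operatorname{Cov}(R-V_\phi,R)^2}{\operatorname{Var}(R)}.
\]
Step 1 gives $\operatorname{Cov}(R-V_\phi,R)=\operatorname{Var}(R)-\operatorname{Var}(R)\Delta_\phi=\operatorname{Var}(R)(1-\Delta_\phi)$. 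Substituting yields exactly $\operatorname{Var}(R)(1-\Delta_\phi)^2$.

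\textbf{Step 3: the consequence.} The constant predictor $V_0=p$ has risk $\operatorname{Var}(R)$. Suppose a critic has risk strictly smaller than this. Combining with the bound gives $\operatorname{Var}(R)(1-\Delta_\phi)^2<\operatorname{Var}(R)$. Dividing by $\operatorname{Var}(R)>0$ gives $(1-\Delta_\phi)^2<1$, so $\Delta_\phi\in(0,2)$, and in particular $\Delta_\phi>0$.

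The only step needing care is the covariance identity in Step 1, which is what turns the conditional contrast into a covariance. Everything else is routine; the one thing to check is that the nondegeneracy assumption $\operatorname{Var}(R)>0$ is used both to define the conditional means and to divide by $\operatorname{Var}(R)$.
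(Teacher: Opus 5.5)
Your proof is correct, but it reaches the bound by a different route than the paper. The paper conditions on $R$ and applies the law of total variance to $R-V_\phi(s)$. Because $R$ is binary, $\mathbb{E}[R-V_\phi(s)\mid R]=(1-\Delta_\phi)R-\mathbb{E}[V_\phi(s)\mid R=0]$ is affine in $R$, and this gives the exact identity
\[
\mathbb{E}[(R-V_\phi(s))^2]=\bigl(\mathbb{E}[V_\phi(s)]-\mathbb{E}[R]\bigr)^2+\mathbb{E}_R[\operatorname{Var}(V_\phi(s)\mid R)]+\operatorname{Var}(R)(1-\Delta_\phi)^2 .
\]
The bound then follows by dropping the first two terms. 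You instead rewrite the contrast as the covariance $\operatorname{Cov}(V_\phi,R)=\operatorname{Var}(R)\Delta_\phi$ and go straight to an inequality via Cauchy--Schwarz against $R$.

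The two arguments share the same underlying geometry. For binary $R$, every $\sigma(R)$-measurable variable is affine in $R$, so conditioning on $R$ is the same as projecting onto $\operatorname{span}\{1,R\}$. The slack you discard in Cauchy--Schwarz is therefore exactly the paper's within-outcome term $\mathbb{E}_R[\operatorname{Var}(V_\phi(s)\mid R)]$.

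What the paper's identity buys is an explicit accounting of the gap. The bound is tight precisely when the critic is mean-calibrated and constant within each outcome class, and this supports its reading of $\Delta_\phi$ as the discrimination component of the error.

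Your version is shorter, and the equality case of Cauchy--Schwarz gives the same tightness condition: $V_\phi$ almost surely affine in $R$ (i.e.\ constant on each outcome class) together with zero mean bias. It also shows that binariness is used only in Step 1. The same two lines give $\mathbb{E}[(R-V_\phi)^2]\ge\operatorname{Var}(R)(1-\beta_\phi)^2$ for any square-integrable return with $\operatorname{Var}(R)>0$, where $\beta_\phi=\operatorname{Cov}(V_\phi,R)/\operatorname{Var}(R)$ is the regression slope of the critic on the return.
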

\end{thmbox}

The bound specifies a necessary condition on the contrast between the critic's predictions on successful and failed trajectories for achieving a given level of prediction accuracy.

Taken together, we therefore rethink the evidence also as \emph{contrastive}: 
by providing positive and negative references for the outcome distinction that the critic must infer,
we reduce the uncertainty left by state-only value estimation insofar as the evidence
clarifies the target prefix's relative support under the successful and failed trajectory processes.

\subsection{\texorpdfstring{$\bm{\pi}$PPO}{piPPO}}

\begin{figure*}[t]
\centering
\includegraphics[width=0.994\linewidth]{figures/pippo_overview.pdf}
\caption{
Overview of $\pi$PPO. For each prompt, the policy generates a group of rollouts, and the verifier assigns rewards according to their outcomes. The labeled rollouts are then reused to construct privileged context $\mathcal I^{(i)}$ for each target rollout, excluding the target itself. The privileged value model uses this context to estimate values along the target rollout. These estimates, together with the verifier rewards, are used to compute advantages for the PPO policy update.
}
\label{fig:overview}
\end{figure*}

Motivated by the analysis above, we propose $\pi$PPO (Figure~\ref{fig:overview}), a critic-side extension of PPO that reuses verified same-prompt rollouts as contrastive evidence for value estimation. 
By constructing a leave-target-out context pool via an adaptation of PPO to group rollouts, we enable each trajectory to serve both as a training target to be evaluated and as a potential labeled reference for its siblings, allowing the critic to assess a prefix in relation to successful and failed attempts at the same problem.

\textbf{Leave-target-out contrastive context.} For a prompt $\boldsymbol x$, the old policy samples $G$ rollouts $\boldsymbol y^{(j)}\sim\pi_{\theta_{\mathrm{old}}}(\cdot\mid\boldsymbol x)$, which the verifier labels. The context of a target $\boldsymbol y^{(i)}$ is drawn from the remaining rollouts, partitioned by label,
\begin{equation}
\mathcal C_i^{\sigma}
=\bigl\{\boldsymbol y^{(j)}: j\neq i,\;
r(\boldsymbol{x},\boldsymbol{y}^{(j)})=\mathbf 1_{\{\sigma=+\}}\bigr\},
\qquad \sigma\in\{+,-\}.
\label{eq:sibling-sets}
\end{equation}
To use trajectories as evidence for prefix-level prediction, we separate the target being evaluated from the trajectories supplying its context.
Let $\mathcal S_k(\mathcal C)$ denote $k$ reference rollouts randomly sampled from $\mathcal C$, together with their verifier labels.
Writing $\Vert$ for concatenation and $\boldsymbol y^\star$ for the supplied ground-truth answer to $\boldsymbol x$, we construct
\begin{equation}
\mathcal I^{(i)}=
\begin{cases}
\mathcal S_1(\mathcal C_i^+)\Vert\mathcal S_1(\mathcal C_i^-),
&\mathcal C_i^+\neq\varnothing,\ \mathcal C_i^-\neq\varnothing,\\
\mathcal S_2(\mathcal C_i^+),
&\mathcal C_i^-=\varnothing,\\
\mathcal S_2(\mathcal C_i^-)\Vert[\boldsymbol y^\star],
&\mathcal C_i^+=\varnothing.
\end{cases}
\label{eq:piac-context}
\end{equation}
When only one outcome is represented among the remaining rollouts, we use two siblings of that outcome; the ground-truth answer is additionally supplied only in the all-negative case and is identified separately from the sampled reference attempts.

\textbf{Privileged advantage estimation.}
By constructing the self-elicited privileged information above, we obtain a critic that no longer predicts from the prefix alone but performs a contextual forward pass $V_\phi^{+}(\boldsymbol x,\mathcal I^{(i)},\boldsymbol y_{<t}^{(i)})$, while the actor stays conditioned on $(\boldsymbol x,\boldsymbol y_{<t}^{(i)})$ alone.
These context-conditioned values enter the standard TD residuals and GAE:
\begin{equation}
\begin{aligned}
\delta_t^{+(i)}
&=r_t^{(i)}
+\gamma V_\phi^+(s_{t+1}^{(i)},
\textcolor{blue!65!black}{\mathcal I^{(i)}})
-V_\phi^+(s_t^{(i)},
\textcolor{blue!65!black}{\mathcal I^{(i)}}),\\
\widehat A_t^{+(i)}
&=\sum_{\ell=0}^{T_i-t}
(\gamma\lambda)^\ell\delta_{t+\ell}^{+(i)}.
\end{aligned}
\label{eq:privileged-gae}
\end{equation}
Here $T_i=|\boldsymbol y^{(i)}|$, and the terminal value is set to zero. The actor is then updated by substituting $\widehat A_t^{+(i)}$ for the advantage in the PPO clipped objective, while retaining its original conditioning.

Notably, the $\pi$PPO design yields two favorable properties:
\begin{itemize}
    \item \textbf{Valid policy gradient baseline under privileged value estimation.}
    Let \(s_t=(\boldsymbol{x},\boldsymbol{y}_{<t})\), and let the privileged
    information \(\mathcal{I}\) be available only to the critic. Because
    \(V_\phi^{+}(s_t,\mathcal{I})\) is independent of the current action \(a_t\),
    it remains a valid baseline:
    \begin{equation}
    \mathbb{E}_{a_t\sim\pi_\theta(\cdot\mid s_t)}
    \left[
    \nabla_\theta\log\pi_\theta(a_t\mid s_t)
    V_\phi^{+}(s_t,\mathcal{I})
    \right]
    =
    V_\phi^{+}(s_t,\mathcal{I})
    \nabla_\theta
    \sum_{a_t}\pi_\theta(a_t\mid s_t)
    =0.
    \end{equation}
    Thus, privileged value estimation introduces no additional bias into the
    expected actor gradient and leaves the underlying policy objective unchanged.

    \item \textbf{Training--inference policy consistency.}
    The privileged context enters only the critic, so the actor uses
    the same conditioning at both training and inference:
    \begin{equation}
    \pi_\theta^{\mathrm{train}}(a_t\mid s_t)
    =\pi_\theta^{\mathrm{infer}}(a_t\mid s_t)
    =\pi_\theta(a_t\mid s_t),
    \qquad s_t=(\boldsymbol x,\boldsymbol y_{<t}).
    \end{equation}
\end{itemize}

Overall, $\pi$PPO incorporates verified contrastive rollout evidence into policy optimization through context-conditioned value estimates and resulting advantages. Since the rollout group collected for each PPO update supplies the critic context, the evidence is refreshed on the fly as the policy evolves.

\section{Experiments}
\label{sec:exp}
\subsection{Experimental Settings}

In this section, we provide a brief overview of the key experimental setup, including training procedures, baselines,
and evaluation details. Additional information can be found in Appendix~\ref{app:implementation}.

\textbf{Training details.} 
We primarily conduct experiments on Qwen3-4B and Qwen3-8B~\citep{qwen3} with thinking mode off, using DAPO-17K~\citep{yu2026dapo}. We build on the VeRL codebase~\citep{verl} and follow the standard PPO recipe with clip-higher enabled. All methods use the same hyperparameters (batchsize 256, actor learning rate 1e-6) and rollout settings (temperature 1.0, top-p 1.0, top-k disabled, 8 rollouts per prompt). Following~\citet{vcppo}, we adopted 50 steps pretrain for the value model warmup as adjustment for RLVR tasks for all critic-based methods, further implementation details can be found in the Appendix~\ref{app:train}.

\paragraph{Baselines.}
We compare \(\pi\)PPO against representative RL baselines. We use vanilla PPO~\citep{schulman2017ppo} as the standard actor--critic baseline and include VAPO~\citep{yue2025vapo} as an RLVR-adapted improvement over PPO. We further compare against GRPO~\citep{shao2024grpo} and DAPO~\citep{yu2026dapo}, a widely adopted GRPO variant with competitive performance and robust training on RLVR tasks.
Together, these experiments assess the benefit of augmenting the PPO critic with verified sibling context and compare $\pi$PPO with RLVR-adapted actor--critic and strong value-free baselines on reasoning tasks. We additionally examine the training behavior of the self-distillation family in Appendix~\ref{app:opsd}.

\paragraph{Benchmarks and evaluation metrics.}
We evaluate our method on challenging in-domain math benchmarks: AIME 2024~\citep{aime2024}, AIME 2025~\citep{aime2025}, BeyondAIME~\citep{bytedance_seed_2025_beyondaime}, HMMT25 (February)~\citep{hmmt25}, and HMMT25 (November)~\citep{hmmt25}. Following~\citet{ds-r1}, we generate 8--16 outputs per problem, depending on test-set size, and report $Pass@1$ accuracy. All methods are evaluated using identical prompts and decoding configurations; details are provided in Appendix~\ref{app:evaluation}. We further assess out-of-distribution generalization, with results reported in Appendix~\ref{app:ood}.

For assessing the critic, we mainly focus on explained variance (EV) and value-function loss. Let $\widehat{G}_{i,t}$ and $\widehat{V}_{i,t}$ denote the value-regression target and critic prediction at token position $t$ in rollout $i$, respectively. Over the valid response-token positions
$\mathcal{M}$, we compute
\begin{equation}
\operatorname{EV}
=
1-
\frac{
    \operatorname{Var}_{(i,t)\in\mathcal{M}}
    \left(\widehat{G}_{i,t}-\widehat{V}_{i,t}\right)
}{
    \operatorname{Var}_{(i,t)\in\mathcal{M}}
    \left(\widehat{G}_{i,t}\right)+\epsilon
},
\qquad
\mathcal{L}_{V}
=
\frac{1}{|\mathcal{M}|}
\sum_{(i,t)\in\mathcal{M}}
\left(\widehat{G}_{i,t}-\widehat{V}_{i,t}\right)^2.
\label{eq:critic-metrics}
\end{equation}
Together, EV and $\mathcal{L}_{V}$ provide complementary views of across-sample target-variance capture and pointwise regression-target fitting, respectively.

\subsection{Main Results}

\begingroup
\setlength{\intextsep}{5pt plus 1pt minus 1pt}
\setlength{\abovecaptionskip}{5pt}

\makeatletter
\@ifundefined{bcol}{\newlength{\bcol}}{}
\@ifundefined{ccol}{\newlength{\ccol}}{}
\@ifundefined{scol}{\newlength{\scol}}{}
\@ifundefined{\string\color@oursbg}{\definecolor{oursbg}{RGB}{237,243,254}}{}
\makeatother
\setlength{\bcol}{50pt}
\setlength{\ccol}{30pt}
\setlength{\scol}{32pt}
\providecommand{\bname}[1]{\mbox{\footnotesize\bfseries #1}}
\providecommand{\kavg}{}
\renewcommand{\kavg}[1]{\mbox{\fontsize{8.2}{9}\selectfont\texttt{#1}}}
\providecommand{\benchhead}{}
\renewcommand{\benchhead}[2]{%
  \begin{tabular}{@{}c@{}}
    \makebox[\bcol][c]{\bname{#1}}\\[-0.8pt]
    \makebox[\bcol][c]{\kavg{#2}}%
  \end{tabular}%
}
\providecommand{\costhead}{}
\renewcommand{\costhead}[2]{%
  \begin{tabular}{@{}c@{}}
    \makebox[\ccol][c]{\bname{#1}}\\[-0.8pt]
    \makebox[\ccol][c]{\kavg{#2}}%
  \end{tabular}%
}
\providecommand{\benchheadS}{}
\renewcommand{\benchheadS}[3]{%
  \begin{tabular}{@{}c@{}}
    \makebox[\scol][c]{\bname{#1}}\\[-1.6pt]
    \makebox[\scol][c]{\bname{#2}}\\[-0.8pt]
    \makebox[\scol][c]{\kavg{#3}}%
  \end{tabular}%
}
\providecommand{\costheadS}{}
\renewcommand{\costheadS}[2]{%
  \begin{tabular}{@{}c@{}}
    \makebox[\ccol][c]{\bname{#1}}\\[-1.6pt]
    \makebox[\ccol][c]{\bname{\strut}}\\[-0.8pt]
    \makebox[\ccol][c]{\kavg{#2}}%
  \end{tabular}%
}

\begin{table}[H]
\centering
\caption{Mathematical reasoning accuracy (\%).
``\texttt{Avg@k}'' denotes mean accuracy over $k$ random generations (i.e., pass@1);
\textbf{Overall} is the mean over the five benchmarks.
Best result within each model size is in bold.
}
\label{tab:main}

\small
\setlength{\tabcolsep}{2pt}
\renewcommand{\arraystretch}{1.12}
\setlength{\ccol}{50pt}

\resizebox{\linewidth}{!}{%
\begin{tabular}{l *{6}{w{c}{\bcol}} w{c}{\ccol}}
    \toprule

    \multirow{2}{*}{\textbf{Method}}
    & \multicolumn{6}{c}{\textbf{Mathematical Reasoning}}
    & \multicolumn{1}{c}{\textbf{Tokens}} \\
    \cmidrule(lr){2-7} \cmidrule(lr){8-8}
    & \benchhead{AIME 24}{Avg@16}
    & \benchhead{AIME 25}{Avg@16}
    & \benchhead{Beyond AIME}{Avg@8}
    & \benchhead{HMMT\textsubscript{Feb}}{Avg@16}
    & \benchhead{HMMT\textsubscript{Nov}}{Avg@16}
    & \benchhead{Overall}{Avg.}
    & \costhead{Gen./ Train}{$10^{9}$} \\
    \midrule

        \multicolumn{8}{@{}l@{}}{\textbf{(a) Qwen3-4B}} \\

        \quad Initial policy
        & 21.7 & 18.9 & 10.8 & 12.3 & 7.3 & 14.2 & -- \\

        \addlinespace[1pt]
        \multicolumn{8}{@{}l@{}}{\emph{Critic-free}} \\

        \quad GRPO
        & 60.4 & 54.8 & 34.5 & 34.6 & 41.5 & 45.2 & 2.33\,/\,2.38 \\

        \quad DAPO
        & 63.1 & 56.7 & 34.5 & 34.6 & 41.9 & 46.2 & 4.42\,/\,2.64 \\

        \addlinespace[1pt]
        \multicolumn{8}{@{}l@{}}{\emph{Actor--critic}} \\

        \quad PPO
        & 57.9 & 54.2 & 32.4 & 32.7 & 40.4 & 43.5 & 2.42\,/\,2.48 \\

        \quad VAPO
        & 63.3 & 52.1 & 33.3 & 33.8 & 43.9 & 45.3 & 2.07\,/\,2.14 \\

        \rowcolor{oursbg}
        \quad \textbf{$\bm{\pi}$PPO (ours)}
        & \textbf{66.7} & \textbf{61.5} & \textbf{39.0}
        & \textbf{37.5} & \textbf{46.9} & \textbf{50.3}
        & 2.53\,/\,2.59 \\

        \addlinespace[2pt]
        \midrule
        \multicolumn{8}{@{}l@{}}{\textbf{(b) Qwen3-8B}} \\

        \quad Initial policy
        & 25.0 & 20.2 & 13.1 & 11.5 & 10.0 & 16.0 & -- \\

        \addlinespace[1pt]
        \multicolumn{8}{@{}l@{}}{\emph{Critic-free}} \\

        \quad GRPO
        & 69.4 & 58.3 & 36.1 & 32.5 & 45.2 & 48.3 & 2.25\,/\,2.30 \\

        \quad DAPO
        & 71.3 & 57.1 & 37.8 & 33.3 & 46.9 & 49.3 & 4.34\,/\,2.59 \\

        \addlinespace[1pt]
        \multicolumn{8}{@{}l@{}}{\emph{Actor--critic}} \\

        \quad PPO
        & 68.1 & 55.2 & 36.3 & 32.3 & 47.7 & 47.9 & 2.35\,/\,2.42 \\

        \quad VAPO
        & 67.3 & 55.8 & 34.3 & 33.5 & 47.1 & 47.6 & 1.95\,/\,2.02 \\

        \rowcolor{oursbg}
        \quad \textbf{$\bm{\pi}$PPO (ours)}
        & \textbf{73.5} & \textbf{61.5} & \textbf{40.0}
        & \textbf{34.4} & \textbf{48.5} & \textbf{51.6}
        & 2.57\,/\,2.63 \\

        \addlinespace[2pt]
        \bottomrule
\end{tabular}%
}
\end{table}
\begin{figure}[H]
    \centering
    \includegraphics[width=0.994\linewidth]{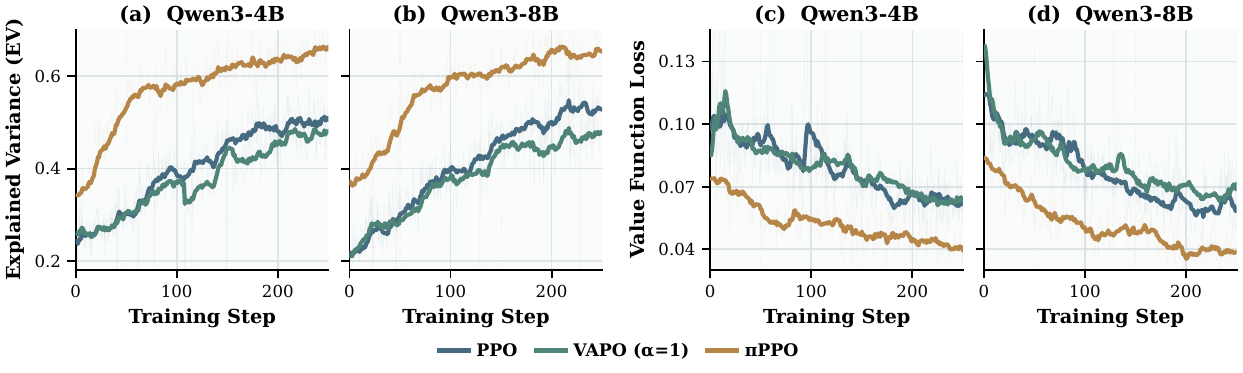}
    \caption{Critic explained variance (EV; a,b) and value-function loss (c,d) during training. Across Qwen3-4B and Qwen3-8B, $\pi$PPO consistently achieves higher EV and lower value-function loss than PPO and VAPO, with improvements evident from the early training steps.}
    \label{fig:critic-quality}
\end{figure}

\paragraph{\ding{182} $\pi$PPO consistently improves end-to-end policy performance.}
As shown in Table~\ref{tab:main}, $\pi$PPO achieves the highest accuracy on all five mathematical reasoning benchmarks at both model scales. Its overall scores reach 50.3\% on Qwen3-4B and 51.6\% on Qwen3-8B, exceeding the strongest baseline, DAPO, by 4.1 and 2.3 percentage points, respectively. Compared with vanilla PPO, the gains are 6.8 and 3.7 points. These results show that privileged value estimation translates into consistent policy gains over both actor--critic and critic-free baselines.

\textbf{\ding{183} Privileged context substantially improves value estimation.} Figure~\ref{fig:critic-quality} compares the critic explained variance and value-function loss throughout training. With either backbone model, the two metrics reveal a consistent pattern: $\pi$PPO starts with a higher explained variance upon entering training, improves sharply during the early updates, and subsequently remains at a high level, while its value-function loss quickly decreases and stays consistently lower. In contrast, the state-only critics of PPO and VAPO improve more gradually and plateau at weaker levels, providing less informative value estimates throughout training.
\endgroup

\subsection{$\pi$PPO admits smaller asymmetric critics}
\label{sec:asymmetric_critic}

Another commitment of standard actor--critic implementations is to instantiate the actor and critic with the same model backbone.
We next ask whether the value estimation advantage in $\pi$PPO can be traded for critic capacity.
To test this hypothesis, we pair Qwen3-4B and Qwen3-8B actors with their corresponding 0.6B and 1.7B critics from the same family. For both PPO and $\pi$PPO, we compare these asymmetric configurations against their actor-sized symmetric counterparts.

\begin{figure}[H]
    \centering
    \includegraphics[width=0.994\linewidth]{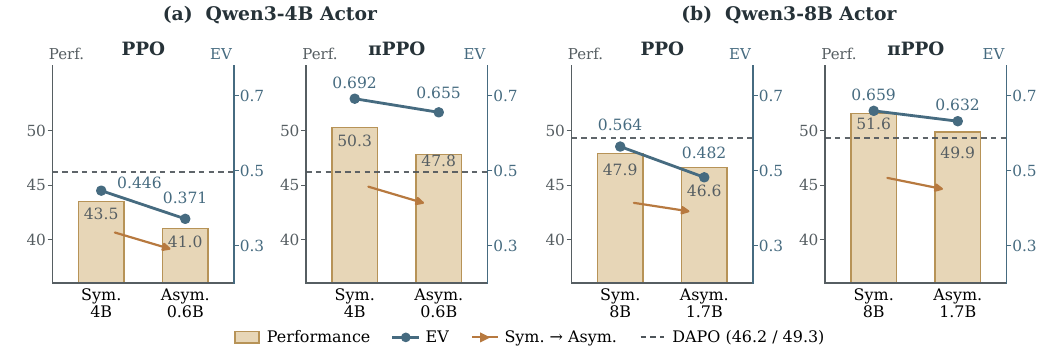}
    \caption{Performance (bars, left axis) and explained variance (EV; curves, right axis) of PPO and $\pi$PPO. Sym.\ and Asym.\ denote critics matching the actor size and smaller critics, respectively. Dashed lines indicate DAPO, the best-performing baseline.}
    \label{fig:asym}
\end{figure}

\textbf{\ding{182} $\pi$PPO achieves strong overall performance with substantially smaller critics.}
As shown in Figure~\ref{fig:asym}, $\pi$PPO achieves overall accuracies of 47.8\% and 49.9\% under asymmetric actor--critic configurations, still exceeding the strongest baseline on both backbones. Notably, it also outperforms actor--critic baselines with substantially larger, actor-sized critics, showing that privileged evidence can unlock stronger policy optimization with compact critics. 

\textbf{\ding{183} $\pi$PPO retains its value estimation advantage at a fraction of the critic size.}
Reducing the critics to 0.6B and 1.7B cuts their parameter counts by approximately 85\% and 79\%, yet EV decreases by only 0.037 and 0.027. These declines are smaller than in PPO, and the resulting EVs of 0.655 and 0.632 still exceed the 0.446 and 0.564 achieved by PPO with actor-sized critics.
Overall accuracy decreases by only 2.5 and 1.7 percentage points relative to the symmetric configurations.

Taken together, these results suggest that privileged value estimation offers a rationale for revisiting actor--critic scale matching: the improved estimation quality motivates asymmetric designs that retain strong policy performance with substantially smaller critics.
\section{Analysis}

\subsection{Characterizing Privileged Value Estimation}

To investigate the performance gap between PPO and $\pi$PPO, we examine their critic predictions, which provide advantage estimates for an otherwise unchanged PPO update.
All analyses use the training data.
Since ground-truth state values are not directly observable, we construct an empirical reference by running 256 MC samples for each prefix state $s_t$ and averaging the returns.
We compare both critics against this MC reference to assess the accuracy of their value predictions.

\begingroup
\setlength{\abovecaptionskip}{5pt}
\textbf{Accuracy of value estimates.} We first group the prefix states by their MC-estimated values and compare the distributions of predictions from the two critics in Figure~\ref{fig:value_acc}(a). Accurate estimates should concentrate near the diagonal, whereas systematic deviations indicate over- or underestimation. PPO exhibits widely dispersed predictions, particularly intermediate value states, while $\pi$PPO generally produces tighter distributions that more closely follow the MC reference.
Figure~\ref{fig:value_acc}(b) further shows that $\pi$PPO achieves lower MSE at every evaluated prefix position. As generation progresses, PPO's error generally increases, whereas that of $\pi$PPO remains comparatively stable. At the final evaluated position, the MSE is approximately 0.05 for $\pi$PPO versus 0.15 for PPO.

\begin{figure}[H]
    \centering
    \includegraphics[width=0.994\linewidth]{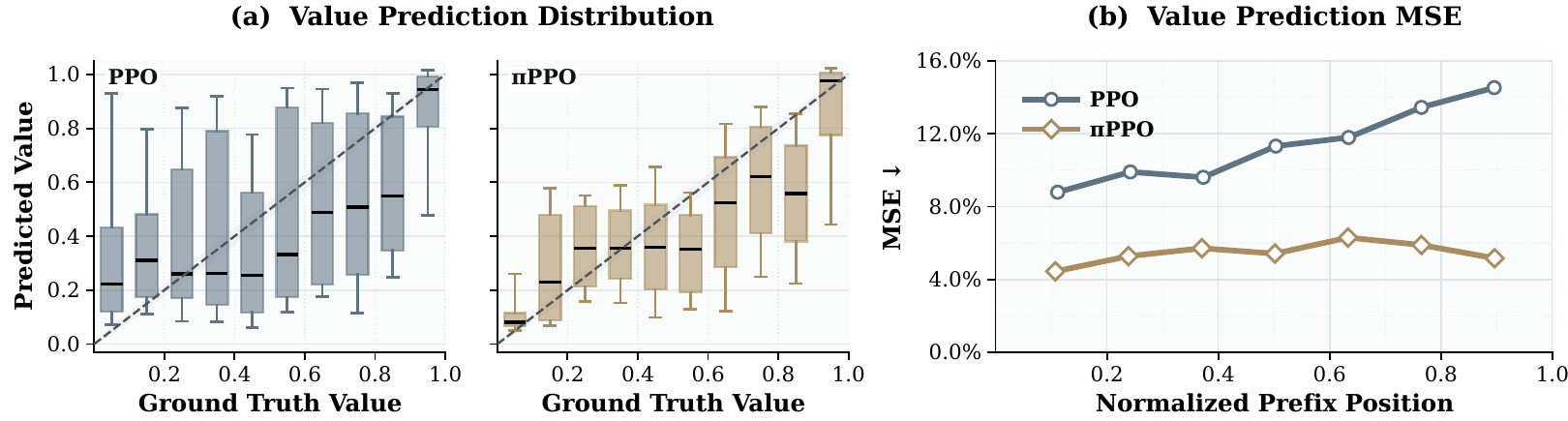}
    \caption{\textbf{(a)} Critic predictions grouped by MC reference value. Boxes show the interquartile range, black bars indicate the mean, and whiskers mark the 10th and 90th percentiles. The dashed diagonal denotes exact agreement with the MC reference. \textbf{(b)} Mean squared error against the MC reference across normalized prefix positions.}
    \label{fig:value_acc}
\end{figure}
\begin{figure}[H]
\centering
    \includegraphics[width=0.994\linewidth]{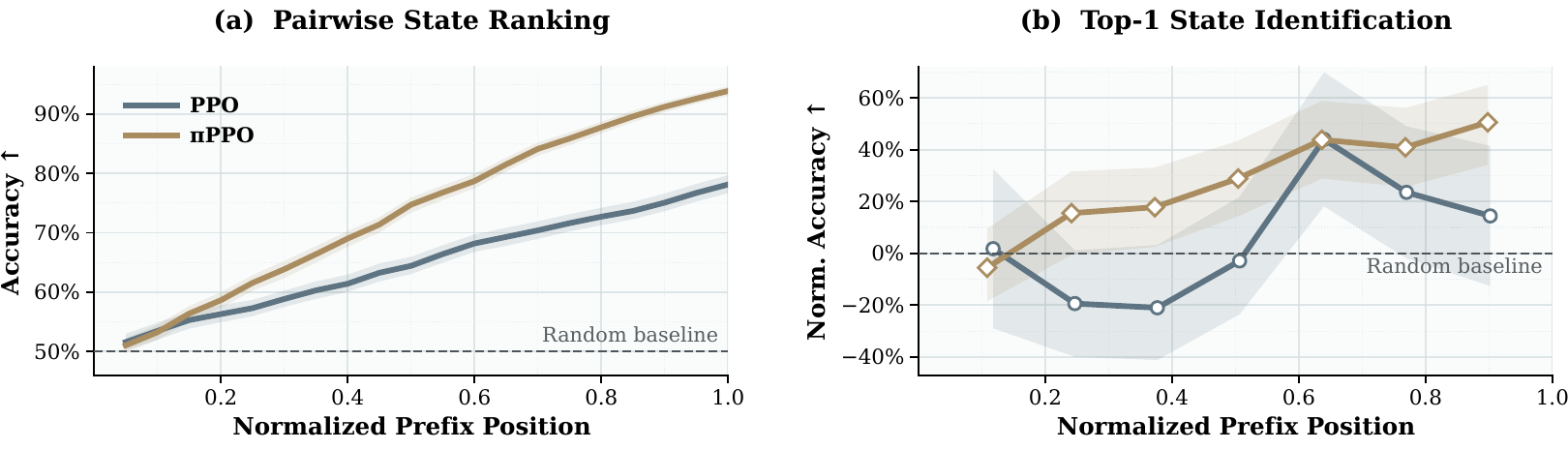}
    \caption{Critic ranking and state identification across normalized prefix positions. \textbf{(a)} Probability of assigning a higher value to a prefix from a successful trajectory than to one from an unsuccessful trajectory. \textbf{(b)} Chance-normalized top-1 accuracy for identifying the state with the highest MC reference value among prefixes sharing the same prompt and reasoning-node rank.
    Dashed lines indicate random-selection baselines.
    }
\label{fig:ranking-identification}
\end{figure}

\textbf{State ranking and identification.}
Pairwise ranking measures how often a critic assigns a higher value to a prefix from a successful trajectory than to one from an unsuccessful trajectory.
Figure~\ref{fig:ranking-identification}(a) shows that $\pi$PPO's advantage widens as generation progresses, reaching approximately 94\% pairwise accuracy compared with 78\% for PPO at the final position. We further form candidate sets from different trajectories of the same prompt at each of seven reasoning-node ranks, and test whether the critic identifies the state with the highest MC reference value. $\pi$PPO achieves higher chance-normalized top-1 accuracy at most evaluated positions, reaching approximately 50\% versus 15\% for PPO at the final position (Figure~\ref{fig:ranking-identification}(b)).

\begin{figure}[H]
\begin{minipage}[t]{0.49\linewidth}
    \vspace{0pt}
    \textbf{Reasoning progress decomposition.}
    Beyond value accuracy and state ranking, we examine whether
    the critic tracks local value changes across reasoning functions
    using the ThinkARM taxonomy
    \citep{li-etal-2026-schoenfelds}.
    This is assessed by correlating stepwise changes in critic predictions with changes in MC reference values and breaking down the results by reasoning function.
    Figure~\ref{fig:reasoning-function-value-tracking}
    shows higher correlations for $\pi$PPO overall and within
    Analyze and Implement, which together account for approximately
    90\% of the evaluated nodes.
    PPO's correlations are near zero in both functions, while $\pi$PPO reaches approximately 0.2.
    These results suggest improved tracking of local value changes
    during the dominant intermediate reasoning activities.
\end{minipage}\hfill
\begin{minipage}[t]{0.48\linewidth}
    \vspace{0pt}
    \centering
    \includegraphics[width=\linewidth]
    {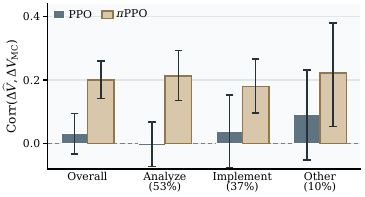}%
    \caption{Pearson correlation between predicted and
    MC-estimated value changes. Error bars show trajectory-cluster
    95\% CIs; percentages indicate the shares of evaluated nodes.}
    \label{fig:reasoning-function-value-tracking}
\end{minipage}
\end{figure}
\endgroup

\subsection{The role of contrastive evidence}
\label{sec:ablation}

\noindent
\begin{minipage}[t]{0.49\linewidth}
    \vspace{0pt}
    We test alternative forms of privileged information to identify which aspects contribute to the gains: removing correctness labels while retaining the selected references (\emph{w/o labels}), replacing opposite-outcome pairs with same-outcome references (\emph{w/ same polarity}), and using only the ground-truth final answer (\emph{w/ GT only}).
    Removing labels or opposite-outcome pairing erases most gains over baselines; the offline GT-only variant trails full $\pi$PPO by 4.7 and 3.1 points on Qwen3-4B and Qwen3-8B, respectively. These results link the gains to the contrastive organization of online trajectory evidence.
\end{minipage}\hfill
\begin{minipage}[t]{0.48\linewidth}
    \vspace{0pt}
    \centering
    \small
    \setlength{\tabcolsep}{3pt}
    \renewcommand{\arraystretch}{1.12}

    \providecommand{\ablhead}[1]{%
        \begin{tabular}{@{}c@{}}
            {\footnotesize\bfseries #1}\\[-0.8pt]
            {\fontsize{8.2}{9}\selectfont\texttt{Avg.}}
        \end{tabular}%
    }

    \begin{tabular*}{\linewidth}
        {@{\extracolsep{\fill}}lcc@{}}
        \toprule
        \multirow{2}{*}{\textbf{Method}}
        & \multicolumn{2}{c}{\textbf{Overall}} \\
        \cmidrule(lr){2-3}
        & \ablhead{Qwen3-4B}
        & \ablhead{Qwen3-8B} \\
        \midrule

        \textbf{$\bm{\pi}$PPO}
        & \textbf{50.3} & \textbf{51.6} \\

        \addlinespace[1pt]
        \quad w/o labels
        & 44.1 & 48.8 \\

        \quad w/ same polarity
        & 43.6 & 48.6 \\

        \quad w/ GT only
        & 45.6 & 48.5 \\

        \bottomrule
    \end{tabular*}
    \captionof{table}{Effects of correctness labels, reference polarity, and ground-truth-only context on average accuracy (\%) across five benchmarks.}
    \label{tab:context-ablation}
\end{minipage}
\par\medskip

\section{Related Work}

\textbf{RL for LLM through the lens of value estimation.}
RL algorithms used for LLM post-training are predominantly policy-gradient methods, with REINFORCE~\citep{reinforce} as their prototype. Prominent approaches in LLM post-training adopt either an actor--critic (e.g., PPO-based RLHF~\citep{schulman2017ppo,ouyang2022training}, VC-PPO~\citep{vcppo}, VAPO~\citep{yue2025vapo}) or critic-free formulation (e.g., bandit-like algorithms such as GRPO~\citep{shao2024grpo}, DAPO~\citep{yu2026dapo}, DR.GRPO~\citep{drgrpo}).
Despite differing in their use of a learned critic, both formulations can be examined through the lens of value estimation. From this perspective, a central question underlying their relative effectiveness is whether value estimation is sufficiently reliable to improve the bias--variance trade-off in policy-gradient updates.
$\pi$PPO addresses this question from the perspective of information available to the critic, introducing training-time privileged information to improve value estimation for policy optimization.

\textbf{Privileged information for policy optimization.}
Privileged information (PI) was formalized in learning using privileged information (LUPI), where a teacher supplies additional information available only during training to improve learning and generalization~\citep{vapnik2009new}; how such information is incorporated, however, remains flexible.
In LLM post-training, a direct approach is hint-based RL, adding PI from external experts into the input or supplied as a response prefix to guide sampling toward successful trajectories~\citep{chen2026nudging,qu2026pope}, training in an off-policy way, also causing a distributional mismatch between training and inference time.
Recent on-policy self-distillation uses PI to induce a stronger in-context distribution that supervises the policy on its own rollouts~\citep{zhao2026self,shenfeld2026self,tang-etal-2026-step}. This provides dense token-level supervision, but model-induced targets may not align with the original reward-maximization objective~\citep{kaur2026rethinking,nguyen2026privileged}.
$\pi$PPO places PI within the optimization process through critic-side value estimation, improving the information available for policy updates while preserving the actor's conditioning and the environmental reward objective.

\section{Conclusion}
In this paper, we recast value estimation in RLVR as an information problem, showing that a critic scoring each state in isolation can be inherently limited by its formulation, and that the evidence beyond target state can lower this floor.
Building on this view, we propose $\pi$PPO, which conditions the critic on a self-elicited, leave-target-out context of verified same-prompt siblings while preserving the actor's original conditioning and environment-defined reward objective, with no additional generation.
Across mathematical reasoning benchmarks with Qwen3-4B and Qwen3-8B backbones, $\pi$PPO outperforms its state-only actor--critic counterparts in critic quality, consistently achieving higher explained variance and lower value loss, while delivering the strongest end-to-end accuracy among all evaluated actor--critic and critic-free baselines---even staying ahead when we further exploit capacity asymmetry with a smaller critic model.
Overall, $\pi$PPO provides a complementary design view on developing more effective RL training algorithms for LLMs.

\bibliography{iclr2027_conference}
\bibliographystyle{iclr2027_conference}

\appendix
\clearpage

\section{Theoretical Proof}

\subsection{Value Estimation under Privileged Conditioning}
\label{app:privileged-risk-proof}

\begin{proof}[Proof of Proposition~\ref{prop:gap}]
All expectations are taken under the fixed joint distribution of $(s,\mathcal I,R)$ induced by the policy and the context construction. The binary return is square-integrable, and we consider only critics with finite squared error.

\paragraph{Step 1: Risk decomposition.}

For any fixed contextual critic, write
\[
R-V_\phi^+(s,\mathcal I)
= \bigl(R-V^{\pi,+}(s,\mathcal I)\bigr)
+ \bigl(V^{\pi,+}(s,\mathcal I)-V_\phi^+(s,\mathcal I)\bigr).
\]
By the definition of conditional expectation,
\[
\mathbb{E}[R-V^{\pi,+}(s,\mathcal I)\mid s,\mathcal I]=0.
\]
Since $V^{\pi,+}(s,\mathcal I)-V_\phi^+(s,\mathcal I)$ is a function
of $(s,\mathcal I)$, the cross term vanishes:
\[
\begin{aligned}
&\mathbb{E}\!\left[
\bigl(R-V^{\pi,+}(s,\mathcal I)\bigr)
\bigl(V^{\pi,+}(s,\mathcal I)-V_\phi^+(s,\mathcal I)\bigr)
\right] \\
&\quad=
\mathbb{E}\!\left[
\bigl(V^{\pi,+}(s,\mathcal I)-V_\phi^+(s,\mathcal I)\bigr)
\mathbb{E}[R-V^{\pi,+}(s,\mathcal I)\mid s,\mathcal I]
\right]=0.
\end{aligned}
\]

Squaring the decomposition and taking expectations therefore gives
\begin{equation}
\label{eq:contextual-risk-decomposition}
\begin{aligned}
\mathbb{E}[(R-V_\phi^+)^2]
&=\mathbb{E}_{s,\mathcal I}[\operatorname{Var}(R\mid s,\mathcal I)]
+\mathbb{E}[(V_\phi^+-V^{\pi,+})^2].
\end{aligned}
\end{equation}
The variance term does not depend on $\phi$, so it can be taken outside
the infimum over the shared critic parameters. This proves the first
equality in the proposition.

Similarly, conditioning on $s$ gives
\begin{equation}
\label{eq:state-only-risk-decomposition}
\begin{aligned}
\mathbb{E}[(R-V_\phi(s))^2]
&=\mathbb{E}_s[\operatorname{Var}(R\mid s)]
+\mathbb{E}[(V_\phi(s)-V^\pi(s))^2].
\end{aligned}
\end{equation}

Taking the infimum over $\phi$ establishes the final equality.

\paragraph{Step 2: Optimal risk comparison. } We next compare the two optimal risks. By assumption, for every
state-only parameter vector $\phi$ there exists a contextual parameter
vector $\psi$ (e.g., with the contribution of the additional context
$\mathcal I$ effectively set to zero) such that
\[
V_\psi^+(s,\mathcal I)=V_\phi(s)
\qquad\text{almost surely}.
\]

Consequently, for every $\phi$,
\[
\inf_\psi\mathbb{E}[(R-V_\psi^+(s,\mathcal I))^2]
\le\mathbb{E}[(R-V_\phi(s))^2].
\]
Taking the infimum on the right gives
\begin{equation}
\label{eq:critic-risk-inclusion}
\inf_\psi\mathbb{E}[(R-V_\psi^+(s,\mathcal I))^2]
\le\inf_\phi\mathbb{E}[(R-V_\phi(s))^2].
\end{equation}
Taking infima in Eqs.~(\ref{eq:contextual-risk-decomposition})
and~(\ref{eq:state-only-risk-decomposition}), whose variance terms
are independent of the critic parameters, and applying
Eq.~(\ref{eq:critic-risk-inclusion}) proves the proposition.
\end{proof}

\subsection{Success--Failure Contrast in Value Estimation}
\label{app:contrast-proof}

\begin{proof}[Proof of Proposition~\ref{prop:contrast}]
All expectations are taken under the fixed joint distribution of
$(s,R)$ specified in Proposition~\ref{prop:contrast}.
Since $R$ is binary and the critic has finite squared error,
all second moments below are finite.

\paragraph{Step 1: Outcome-conditioned risk decomposition.}
By the definition of $\Delta_\phi$ and the binary nature of $R$,
\[
\begin{aligned}
\mathbb{E}[R-V_\phi(s)\mid R]
&=(1-\Delta_\phi)R-\mathbb{E}[V_\phi(s)\mid R=0], \\
\operatorname{Var}(R-V_\phi(s)\mid R)
&=\operatorname{Var}(V_\phi(s)\mid R).
\end{aligned}
\]
Applying the law of total variance to $R-V_\phi(s)$ therefore gives
\begin{equation}
\label{eq:outcome-risk-decomposition}
\begin{aligned}
\mathbb{E}[(R-V_\phi(s))^2]
&=\bigl(\mathbb{E}[V_\phi(s)]-\mathbb{E}[R]\bigr)^2
+\operatorname{Var}(R-V_\phi(s)) \\
&=\bigl(\mathbb{E}[V_\phi(s)]-\mathbb{E}[R]\bigr)^2
+\mathbb{E}_R[\operatorname{Var}(V_\phi(s)\mid R)] \\
&\quad+\operatorname{Var}(R)(1-\Delta_\phi)^2.
\end{aligned}
\end{equation}

\paragraph{Step 2: Contrast lower bound.}
Let
\[
\varepsilon_\phi
=\bigl(\mathbb{E}[V_\phi(s)]-\mathbb{E}[R]\bigr)^2
+\mathbb{E}_R[\operatorname{Var}(V_\phi(s)\mid R)]
\ge 0.
\]
Equation~(\ref{eq:outcome-risk-decomposition}) then yields
\begin{equation}
\label{eq:contrast-risk-bound}
\begin{aligned}
\mathbb{E}[(R-V_\phi(s))^2]
&=\operatorname{Var}(R)(1-\Delta_\phi)^2+\varepsilon_\phi \\
&\ge\operatorname{Var}(R)(1-\Delta_\phi)^2.
\end{aligned}
\end{equation}
For any constant predictor $c$,
\[
\mathbb{E}[(R-c)^2]
=\operatorname{Var}(R)+(c-\mathbb{E}[R])^2,
\]
so the optimal constant prediction risk is $\operatorname{Var}(R)$,
attained at $c=\mathbb{E}[R]$.
If $\Delta_\phi\le 0$, then $(1-\Delta_\phi)^2\ge 1$, and
Eq.~(\ref{eq:contrast-risk-bound}) implies
$\mathbb{E}[(R-V_\phi(s))^2]\ge\operatorname{Var}(R)$.
Thus, any critic improving upon the optimal constant prediction risk
must satisfy $\Delta_\phi>0$.
\end{proof}%

\section{Implementation Details}
\label{app:implementation}
In this section, we provide the details of our main experiments in Section~\ref{sec:exp}.

\subsection{Training Details}
\label{app:train}

We build our implementation on the VeRL framework~\citep{verl}. All methods are trained for 250 steps with a batch size of 256, utilizing learning rates of $10^{-6}$ for the actor and $10^{-5}$ for the critic. Following DAPO and VAPO~\citep{yu2026dapo,yue2025vapo}, we adopt Clip-Higher with $\varepsilon_{\mathrm{low}}=0.2$ and $\varepsilon_{\mathrm{high}}=0.28$, omitting KL regularization.
For critic-based methods, we set $\gamma=\lambda=1.0$ and adopt fixed-policy value pretraining with Monte Carlo targets following VC-PPO~\citep{vcppo}.
Specifically, we perform 50 pretraining steps with the actor frozen, linearly warming up the critic learning rate from $10^{-7}$ to $10^{-5}$.

During policy training, we generate $G=8$ responses per prompt with temperature $1.0$, top-$p$ $1.0$, and top-$k$ disabled. For $\pi$PPO, references are selected randomly subject to the leave-target-out construction in Eq.~(\ref{eq:piac-context}). The maximum response length and the privileged-context truncation length are both set to 8192 tokens per sample. During value pretraining, we sample one response per prompt online, and set the batchsize to 2048 instead; $\pi$PPO obtains its reference context from a cache of the most recent verified rollouts from previous epochs. The remaining pretraining settings are shared with the standard critic. Algorithm~\ref{alg:pi-ppo} summarizes the overall procedure.

\paragraph{Baselines.}
\label{app:baselines}
All baselines share the common training, rollout, and evaluation settings described above. For a fair comparison, all methods use Clip-Higher and generate $G=8$ responses per prompt. For DAPO~\citep{yu2026dapo}, we implement dynamic sampling with a rejection pool size of 512, twice the training batch size. For VAPO~\citep{yue2025vapo}, we set the length-adaptive GAE coefficient $\alpha$ to $1.0$ (functioning only on the actor side).

\paragraph{Context ablations.}
\label{app:ablation-details}
All ablations modify the critic's privileged context and otherwise follow the common settings.
For \emph{w/o labels}, we preserve the selected reference pair, replace correctness headers with ordinal labels, and shuffle the reference order.
For \emph{w/ same polarity}, we supply two correct or two incorrect sibling references with their correctness labels, otherwise falling back to a contrastive pair.
For \emph{w/ GT only}, we remove all rollout references and provide only the verified ground-truth final answer.

\subsection{Evaluation and Analysis Details}
\label{app:evaluation}

We use the same vanilla evaluation prompts and decoding settings across methods for each backbone. Responses are sampled with temperature $0.6$, top-$p$ $0.95$, top-$k$ $20$, and a maximum length of 32768 tokens. We generate 16 responses per problem on AIME 2024, AIME 2025, HMMT February, and HMMT November, and 8 responses per problem on BeyondAIME. We report the average correctness across sampled responses as an estimate of pass@1, with Overall computed as the mean over the five benchmarks.

\textbf{Critic diagnostics.}
For critic diagnostics, we evaluate step-250 Qwen3-8B checkpoints on 128 prompts per method and estimate values from 256 MC continuations at seven semantic prefixes, with completed responses capped at 8192 tokens. Pairwise ranking uses 20 prefix positions; top-1 identification retains candidate sets whose MC-value range exceeds twice the estimated uncertainty.

\begin{algorithm}[t]
\caption{$\pi$PPO: Privileged-Information PPO}
\label{alg:pi-ppo}
\DontPrintSemicolon

\KwIn{
Actor $\pi_\theta$; warmed-up privileged critic $V_\phi^+$;
prompt dataset $\mathcal D$ with ground-truth answers;
verifier $R$; group size $G$;
discount $\gamma$; GAE parameter $\lambda$
}

\For{each training iteration}{
    $\theta_{\mathrm{old}} \leftarrow \theta$\;
    Sample a prompt batch $\mathcal B \subset \mathcal D$\;

    \ForEach{prompt $\boldsymbol x \in \mathcal B$}{
        Retrieve its ground-truth answer $\boldsymbol y^\star$\;

        \tcp{Collect and verify same-prompt rollouts}
        Independently sample
        $\{\boldsymbol y^{(i)}\}_{i=1}^{G}
        \sim \pi_{\theta_{\mathrm{old}}}(\cdot\mid\boldsymbol x)$\;
        Evaluate $R(\boldsymbol x,\boldsymbol y^{(i)})\in\{0,1\}$
        and construct token rewards $\{r_t^{(i)}\}_{t=1}^{T_i}$,
        where $T_i=|\boldsymbol y^{(i)}|$\;

        \For{$i=1,\ldots,G$}{
            \tcp{Exclude the target before selecting references}
            $\mathcal C_i^+ \leftarrow
            \{\boldsymbol y^{(j)}:
            j\neq i,\ R(\boldsymbol x,\boldsymbol y^{(j)})=1\}$\;
            $\mathcal C_i^- \leftarrow
            \{\boldsymbol y^{(j)}:
            j\neq i,\ R(\boldsymbol x,\boldsymbol y^{(j)})=0\}$\;

            \uIf{$\mathcal C_i^+\neq\varnothing$
                 \textnormal{ and } $\mathcal C_i^-\neq\varnothing$}{
                $\mathcal I^{(i)} \leftarrow
                \mathcal S_1(\mathcal C_i^+)
                \Vert \mathcal S_1(\mathcal C_i^-)$\;
            }
            \uElseIf{$\mathcal C_i^-=\varnothing$}{
                $\mathcal I^{(i)} \leftarrow
                \mathcal S_2(\mathcal C_i^+)$\;
            }
            \Else{
                $\mathcal I^{(i)} \leftarrow
                \mathcal S_2(\mathcal C_i^-)
                \Vert [\boldsymbol y^\star]$\;
            }

            \tcp{Estimate values under a fixed context}
            $s_t^{(i)}\leftarrow
            (\boldsymbol x,\boldsymbol y_{<t}^{(i)})$\;
            Compute and store
            $\widehat V_{i,t}\leftarrow
            V_\phi^+(s_t^{(i)},\mathcal I^{(i)})$,
            with zero value at terminal states\;

            \For{$t=T_i,\ldots,1$}{
                $\delta_t^{+(i)}\leftarrow
                r_t^{(i)}+\gamma\widehat V_{i,t+1}
                -\widehat V_{i,t}$\;
                $\widehat A_t^{+(i)}\leftarrow
                \delta_t^{+(i)}
                +\gamma\lambda\widehat A_{t+1}^{+(i)}$,
                with $\widehat A_{T_i+1}^{+(i)}=0$\;
            }
            Compute and store value-regression targets
            $\{\widehat G_{i,t}\}_{t=1}^{T_i}$
            using the PPO return-target rule\;
        }
    }
    \For{each PPO optimization epoch}{
        Update $\phi$ by value regression on
        $(s_t^{(i)},\mathcal I^{(i)},\widehat G_{i,t})$\;
        Update $\theta$ using the clipped PPO objective
        with $\widehat A_t^{+(i)}$ and
        $\displaystyle
        \rho_t^{(i)}(\theta)=
        \frac{\pi_\theta(y_t^{(i)}\mid
        \boldsymbol x,\boldsymbol y_{<t}^{(i)})}
        {\pi_{\theta_{\mathrm{old}}}(y_t^{(i)}\mid
        \boldsymbol x,\boldsymbol y_{<t}^{(i)})}$\;
    }
}
\end{algorithm}
\section{Prompt Template for $\pi$PPO}

\begin{tcolorbox}[
  enhanced,
  sharp corners,
  width=\textwidth,
  boxrule=.6pt,
  colback=white,
  colframe=black!55,
  coltitle=white,
  colbacktitle=black!68,
  fontupper=\small,
  left=9pt,
  right=9pt,
  top=7pt,
  bottom=7pt,
  title={\textbf{Prompt Template used in $\pi$PPO}\hfill\normalfont\small}
]

\noindent{\bfseries\texttt{<|im\_start|>user}}\par
\vspace{.7mm}
\noindent\hspace*{1.5em}\cmt{[shared source: original user turn $\rightarrow$ actor, $V_\phi$, $V_\phi^+$]}\par
\vspace{1.6mm}
\noindent
Let $x$ and $y$ be real numbers with $x+y=25$ and $xy=126$. Find $|x-y|$. \\
Please reason step by step, and put your final answer within \verb|\boxed{}|.

\vspace{2.4mm}
\begin{privblock}
\noindent\hspace*{1.2em}\cmt{[critic-only source: labeled reference attempts $\rightarrow$ $V^+_\phi$ only]}\par
\vspace{1.5mm}
Below are reference attempts at the problem. Each one is explicitly labeled with whether it reached the correct final answer. They are backgrounded and are NOT part of the response.

\vspace{2.5mm}
\lbl{[Reference attempt --- CORRECT]:}
\begin{pistep}
  \item The sum and the product are given, so let $s=x+y=25$ and $p=xy=126$.
  \item I want $|x-y|$, and the clean route is to obtain $(x-y)^2$ from $s$ and $p$.
  \item Since $(x-y)^2=x^2-2xy+y^2$ and $(x+y)^2=x^2+2xy+y^2$, subtracting gives $(x-y)^2=(x+y)^2-4xy$.
  \item Substituting, $(x-y)^2=625-4\cdot 126=625-504=121$.
  \item Therefore $|x-y|=\sqrt{121}=11$.
  \item Check: the numbers solve $t^2-25t+126=0$, so they are $18$ and $7$; their sum is $25$ and their product is $126$, both as required.
  \item The final answer is $\boxed{11}$.
\end{pistep}

\vspace{2.5mm}
\lbl{[Reference attempt --- INCORRECT]:}
\begin{pistep}
  \item The sum and the product are given, so let $s=x+y=25$ and $p=xy=126$.
  \item I want $|x-y|$, and the clean route is to obtain $(x-y)^2$ from $s$ and $p$.
  \item Expanding the square of the difference, $(x-y)^2=(x+y)^2-2xy$.
  \item Substituting, $(x-y)^2=625-2\cdot 126=625-252=373$.
  \item Therefore $|x-y|=\sqrt{373}\approx 19.31$.
  \item Check: the numbers are then about $22.16$ and $2.84$, and $22.16+2.84=25$, so the sum comes out right.
  \item The final answer is $\boxed{\sqrt{373}}$.
\end{pistep}
\end{privblock}

\vspace{2.4mm}
{\bfseries\texttt{<|im\_end|>}} \\[.6mm]
\noindent{\bfseries\texttt{<|im\_start|>assistant}}\par
\vspace{.7mm}
\noindent\hspace*{1.5em}\cmt{[shared source: actor-sampled completion $\rightarrow$ $V_\phi$, $V_\phi^+$]}\par
\vspace{1.6mm}
\begin{pistep}
  \item Let $x$ and $y$ be the two numbers, so $x+y=25$ and $xy=126$.
  \item Then $x$ and $y$ are the roots of $t^2-25t+126=0$.
  \item Its discriminant is $25^2-4\cdot 126=625-504=121$.
  \item Hence $t=\frac{25\pm\sqrt{121}}{2}=\frac{25\pm 11}{2}$, giving $t=18$ and $t=7$.
  \item So $|x-y|=18-7=11$.
  \item The final answer is $\boxed{11}$.
\end{pistep}
\vspace{.8mm}
{\bfseries\texttt{<|im\_end|>}}
\end{tcolorbox}

\begingroup
\captionof{table}{The actor and the state-only critic $V_\phi$ share the original user turn and actor-generated completion, whereas the privileged critic $V_\phi^+$ receives the same completion with a labeled block of additional information appended to the user turn. The two reference attempts in this block are numbered step by step and agree verbatim through step $(2)$, but diverge at step $(3)$ in their expansion of $(x-y)^2$, using $-4xy$ and $-2xy$, respectively, forming a direct process-level contrast.}
\label{tab:pi_prompt}
\endgroup
\section{Additional Results}

\subsection{Evaluation on OOD benchmarks}
\label{app:ood}

\makeatletter
\@ifundefined{bcol}{\newlength{\bcol}}{}
\makeatother
\providecommand{\bname}[1]{\mbox{\footnotesize\bfseries #1}}
\providecommand{\kavg}{}
\renewcommand{\kavg}[1]{\mbox{\fontsize{8.2}{9}\selectfont\texttt{#1}}}
\providecommand{\benchhead}{}
\renewcommand{\benchhead}[2]{%
  \begin{tabular}{@{}c@{}}
    \makebox[\bcol][c]{\bname{#1}}\\[-0.8pt]
    \makebox[\bcol][c]{\kavg{#2}}%
  \end{tabular}%
}
\providecommand{\subarrow}{}
\renewcommand{\subarrow}{\hspace{0.5em}\rotatebox[origin=c]{180}{$\Lsh$}}

\begin{table}[t]
\centering
\caption{Out-of-domain accuracy (\%) of policies trained on DAPO-17K.
``\texttt{Avg@k}'' denotes mean accuracy over $k$ random generations (i.e., pass@1);
\textbf{Overall} is the mean over the two benchmarks.
Asym: the method above with a smaller critic (0.6B for 4B, 1.7B for 8B).
Best result within each model size is in bold.}
\label{tab:ood}

\small
\setlength{\tabcolsep}{3pt}
\renewcommand{\arraystretch}{1.12}
\setlength{\bcol}{50pt}

\begin{tabular}{@{} l @{\hspace{8pt}} *{6}{w{c}{\bcol}} @{}}
    \toprule

    \multirow{2}{*}{\textbf{Method}}
    & \multicolumn{3}{c}{\textbf{Qwen3-4B}}
    & \multicolumn{3}{c}{\textbf{Qwen3-8B}} \\
    \cmidrule(lr){2-4} \cmidrule(lr){5-7}
    & \benchhead{GPQA}{Avg@8} & \benchhead{MMLU}{Avg@1} & \benchhead{Overall}{Avg.}
    & \benchhead{GPQA}{Avg@8} & \benchhead{MMLU}{Avg@1} & \benchhead{Overall}{Avg.} \\
    \midrule

        Initial policy
        & 45.7 & 60.8 & 53.3 & 51.0 & 66.1 & 58.6 \\

        \addlinespace[1pt]
        \multicolumn{7}{@{}l@{}}{\emph{Critic-free}} \\

        \quad GRPO
        & \textbf{52.9} & 67.3 & 60.1 & 58.3 & 71.6 & 65.0 \\

        \quad DAPO
        & 52.8 & 67.7 & 60.3 & 56.0 & \textbf{72.0} & 64.0 \\

        \addlinespace[1pt]
        \multicolumn{7}{@{}l@{}}{\emph{Actor--critic}} \\

        \quad PPO
        & 51.5 & 66.3 & 58.9 & 56.3 & 71.3 & 63.8 \\

        \quad \subarrow\ Asym
        & 52.7 & 66.8 & 59.8 & 57.9 & 71.4 & 64.7 \\

        \quad VAPO
        & 51.4 & 66.0 & 58.7 & 56.9 & 69.7 & 63.3 \\

        \quad \textbf{$\bm{\pi}$PPO (ours)}
        & 52.8 & \textbf{68.2} & \textbf{60.5} & 57.7 & 71.7 & 64.7 \\

        \quad \subarrow\ Asym
        & 52.6 & 66.6 & 59.6 & \textbf{58.8} & 71.5 & \textbf{65.2} \\

        \bottomrule
\end{tabular}
\end{table}

We additionally evaluated our method on representative multi-task benchmarks spanning diverse domains in science, humanities, and professional knowledge: GPQA-Diamond~\citep{gpqa} and MMLU-Pro~\citep{mmlu-pro}, for examining the out-of-distribution generalization.

As shown in Table~\ref{tab:ood}, all RL-trained policies outperform their initial counterparts. At the 4B scale, $\pi$PPO achieves the best Overall score of 60.5. At the 8B scale, $\pi$PPO reaches 64.7, its asymmetric variant achieves the best 8B Overall score of 65.2 using only a 1.7B critic. These results suggest that the in-domain advantages of $\pi$PPO may carry over to other related domains.

\subsection{Additional Comparison with self-distillation methods}
\label{app:opsd}

\makeatletter
\@ifundefined{bcol}{\newlength{\bcol}}{}
\makeatother
\setlength{\bcol}{50pt}
\providecommand{\bname}[1]{\mbox{\footnotesize\bfseries #1}}
\providecommand{\kavg}{}
\renewcommand{\kavg}[1]{\mbox{\fontsize{8.2}{9}\selectfont\texttt{#1}}}
\providecommand{\benchhead}{}
\renewcommand{\benchhead}[2]{%
  \begin{tabular}{@{}c@{}}
    \makebox[\bcol][c]{\bname{#1}}\\[-0.8pt]
    \makebox[\bcol][c]{\kavg{#2}}%
  \end{tabular}%
}

\begin{table}[t]
\centering
\caption{Comparison with RLSD on mathematical reasoning benchmarks. \textbf{Overall} averages the five benchmark scores; bold denotes the best result within each model size.}
\label{tab:rlsd}

\small
\setlength{\tabcolsep}{3pt}
\renewcommand{\arraystretch}{1.12}

\begin{tabular}{@{} l @{\hspace{8pt}} *{6}{w{c}{\bcol}} @{}}
    \toprule

    \multirow{2}{*}{\textbf{Method}}
    & \multicolumn{6}{c}{\textbf{Mathematical Reasoning}} \\
    \cmidrule(lr){2-7}
    & \benchhead{AIME 24}{Avg@16}
    & \benchhead{AIME 25}{Avg@16}
    & \benchhead{Beyond AIME}{Avg@8}
    & \benchhead{HMMT\textsubscript{Feb}}{Avg@16}
    & \benchhead{HMMT\textsubscript{Nov}}{Avg@16}
    & \benchhead{Overall}{Avg.} \\
    \midrule

        \multicolumn{7}{@{}l@{}}{\textbf{(a) Qwen3-4B}} \\

        Initial policy
        & 21.7 & 18.9 & 10.8 & 12.3 & 7.3 & 14.2 \\

        RLSD
        & 57.1 & 47.9 & 31.5 & 31.5 & 38.8 & 41.4 \\

        \textbf{$\bm{\pi}$PPO (ours)}
        & \textbf{66.7} & \textbf{61.5} & \textbf{39.0}
        & \textbf{37.5} & \textbf{46.9} & \textbf{50.3} \\

        \addlinespace[2pt]
        \midrule
        \multicolumn{7}{@{}l@{}}{\textbf{(b) Qwen3-8B}} \\

        Initial policy
        & 25.0 & 20.2 & 13.1 & 11.5 & 10.0 & 16.0 \\

        RLSD
        & 67.5 & 50.4 & 35.3 & 28.3 & 38.8 & 44.1 \\

        \textbf{$\bm{\pi}$PPO (ours)}
        & \textbf{73.5} & \textbf{61.5} & \textbf{40.0}
        & \textbf{34.4} & \textbf{48.5} & \textbf{51.6} \\

        \bottomrule
\end{tabular}
\end{table}

We additionally compare with OPSD~\citep{zhao2026self} and its RL-integrated variants, RLSD~\citep{yang2026self} and RLCSD~\citep{pan2026rlcsd}, since these methods can be viewed in the same general way as ours: leveraging privileged information to provide additional supervision or shape token-level rewards.

We follow the original implementations and adapt these methods to the training setup used in our main experiments. The reference configuration uses a batch size of 256, an actor learning rate of $10^{-6}$, and a budget of 250 policy updates. We sample $G=8$ responses per prompt with temperature $1.0$, top-$p$ $1.0$, disabled top-$k$ filtering, and a maximum response length of 8192 tokens. We explore multiple configurations around this setup to adapt the methods for better training stability and effectiveness.

Across the tested configurations, we consistently observe training oscillations and collapse for OPSD and RLCSD, with no effective gain observed compared to the initial model. Among these three baselines, only RLSD achieves reasonable downstream performance under our experimental setting, which follows their original setups.

\end{document}